\documentclass[letterpaper, 10 pt, conference]{ieeeconf}  

\IEEEoverridecommandlockouts                              

\overrideIEEEmargins                                      

\usepackage{amsmath,amsfonts}
\usepackage[algo2e,ruled,vlined,linesnumbered,resetcount,norelsize]{algorithm2e}

\usepackage{array}
\usepackage[caption=false,font=normalsize,labelfont=sf,textfont=sf]{subfig}
\usepackage{textcomp}
\usepackage{stfloats}
\usepackage{float}
\usepackage{url}
\usepackage{bm}
\usepackage{verbatim}
\usepackage{graphicx}
\usepackage{cite}
\usepackage{mathtools} 
\hyphenation{op-tical net-works semi-conduc-tor IEEE-Xplore}

\usepackage{amssymb}  

\usepackage{amsthm}
\usepackage{algpseudocode}
\usepackage{xcolor}
\usepackage{accents}
\usepackage{mathrsfs}
\usepackage[normalem]{ulem}
\usepackage{soul}
\usepackage{booktabs}

\usepackage{multirow}

\usepackage[T1]{fontenc}
\usepackage[utf8]{inputenc}
\usepackage[skip=0.333\baselineskip]{caption}
\usepackage{siunitx} 
\newcolumntype{T}{S[table-format=3.3, input-symbols={()},
                    table-space-text-post={$^{***}$},
                    table-align-text-post=false]}
\usepackage[colorlinks=true, allcolors=black]{hyperref}
\widowpenalty=10000
\clubpenalty=10000
\usepackage{tabularx,booktabs}
\newcolumntype{C}{>{\centering\arraybackslash}X} 
\setlength{\extrarowheight}{1pt}

\newtheorem{proposition}{Proposition }

\newtheorem{remark}{Remark}


\definecolor{green}{RGB}{11,155,13}

\newcommand{\marginXW}[1]{\marginpar{\color{purple}\tiny\ttfamily Xuan: #1}}


\SetCommentSty{mycommfont}
\newcommand{\longthmtitle}[1]{\mbox{} \emph{(#1):}}
\SetKwRepeat{Do}{do}{while}



\title{\LARGE \bf
Learning Coordinated Maneuver in Adversarial Environments
}

\author{Zechen Hu, Manshi Limbu, Daigo Shishika, Xuesu Xiao, and Xuan Wang
\thanks{Work supported by  Army Research Office (W911NF-22-2-0242) and NSF (2332210). George Mason University. {\tt\scriptsize \{zhu3, klimbu2, dshishik, xiao, xwang64\}@gmu.edu}. 
}
}

\begin{document}
\maketitle
\thispagestyle{empty}
\pagestyle{empty}

 \begin{abstract}
This paper aims to solve the coordination of a team of robots traversing a route in the presence of adversaries with random positions. Our goal is to minimize the overall cost of the team, which is determined by (i) the accumulated risk when robots stay in adversary-impacted zones and (ii) the mission completion time. During traversal, robots can reduce their speed and act as a `guard' (the slower, the better), which will decrease the risks certain adversary incurs. 
This leads to a trade-off between the robots' guarding behaviors and their travel speeds. The formulated problem is highly non-convex and cannot be efficiently solved by existing algorithms. Our approach includes a theoretical analysis of the robots' behaviors for the single-adversary case. As the scale of the problem expands, solving the optimal solution using optimization approaches is challenging, therefore, we employ reinforcement learning techniques by developing new encoding and policy-generating methods. Simulations demonstrate that our learning methods can efficiently produce team coordination behaviors. We discuss the reasoning behind these behaviors and explain why they reduce the overall team cost.    
\end{abstract}

\section{Introduction}


Coordination of multi-robot systems has been studied under various contexts~\cite{DA-KS-JR:18}, including cooperative path planning~\cite{TA-OE-KA-SM:17,wang2022d3g}, resource sharing and task allocation~\cite{AM-JJ-RA-RA-WJ-HE:21}, and geometric formation maintenance~\cite{OK-PMC-AH:15,zhou2023distributed}. Complementary to the challenges addressed in these works, this paper introduces a new problem centered around generating coordinated team behaviors to reduce risks caused by adversaries.
Considering a graph-based representation, team coordination problems have been studied in~\cite{DCA-WKC-MJ:23,LM-HZ-OS:23,dimmig2023uncertainty}. In this work, as shown in Fig.~\ref{Fig_intro}, we consider a route-based version of the problem, fine-grinding the movements of robots in continuous space. The environment features adversaries whose positions are randomly initialized from a set. Robots accumulate `risks' when traveling through adversary-controlled zones, and such risks can be reduced by robots if they slow down and `guard' a certain adversary. We define the team cost as a combination of mission completion time and accumulated risk. Therefore, minimizing this cost requires robots' coordination, trading off between their speed and the adoption of guarding behaviors. This adds a novel dimension of complexity and strategic decision-making, which is unattended in conventional multi-robot coordination tasks.

The described scenario has direct applications in many domains. For instance, when multiple rescue robots need to pass a fire-engulfed corridor~\cite{bogue2021role}, some robots might deploy countermeasures as \textit{guard} to quell flames, ensuring safer passage for their peers; On battlefields, when multiple vehicles need to traverse enemy-controlled territories~\cite{matyas2019brief}, suppressive fire from allies can guard and mitigate threats posed by the enemy. The formulated multi-robot coordination problem is challenging due to the combinatorial nature of robots' states, their hybrid actions (speed and guard), and multiple constraints embedded through robot-adversary interactions.

\begin{figure}[t]
    \centering
	\includegraphics[width=0.45\textwidth]{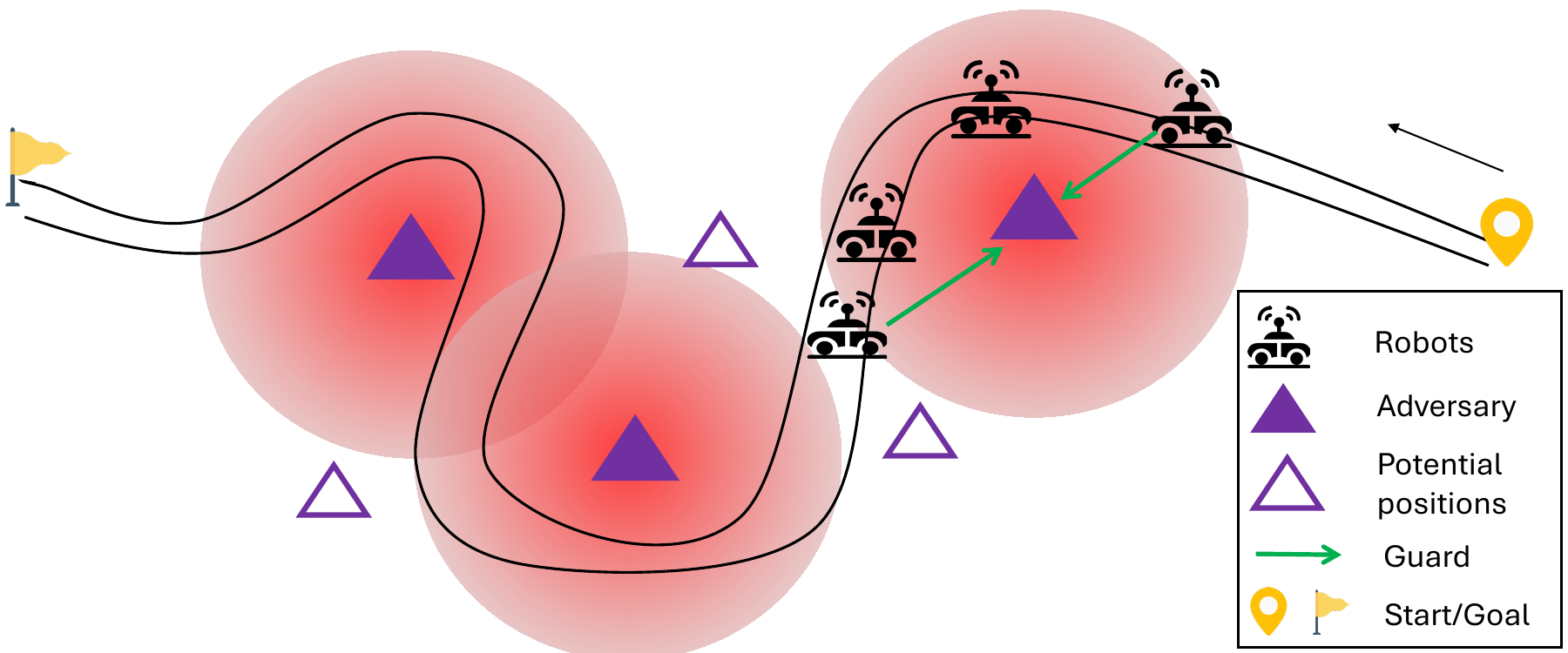}
	\caption{A team of robots traversing an environment with risk. Adversary positions are randomly initialized from a set.}
 \vspace{-4ex}
	\label{Fig_intro}
\end{figure}

To solve this problem, one feasible approach from the optimization literature is Mixed Integer Programming (MIP). However, since MIP for multi-robot coordination generally provides an \textit{open-loop} solution for the full trajectories of all robots~\cite{cauligi2020learning}, it faces scalability challenges as the number of robots increases, and requires repeated re-planning if the adversary position or any environmental features change.
Motivated by these challenges, in this paper, we seek \textit{closed-loop} policies using a Reinforcement Learning (RL)-based approach. The \textit{main contributions} are as follows: (i) We rigorously formulate a new multi-robot coordination problem that incorporates guard behaviors among team members to mitigate risks from adversaries.
(ii) We investigate the conversion of the problem into Markov Decision Processes (MDPs) with hybrid move and guard actions. 
(iii) We introduce a Hybrid Proximal Policy Optimization algorithm tailored to our problem, featuring special treatment of reward reshaping and a unique multi-weighted hot encoding for representing robots' states and adversary positions.
(iv) We perform extensive simulated experiments to validate the effectiveness of the proposed method and compare it with MIP methods. We also elucidate the rationale behind observed behaviors and their effectiveness in reducing the collective team cost.



\section{Literature review}
We review related work on MIP and RL for solving multi-robot coordination problems.

\subsection{Mixed Integer Programming}
MIP has been applied to various multi-robot coordination problems, including task allocation~\cite{ZC-SJA:16,wang2021distributed}, multi-robot path planning~\cite{dimmig2023uncertainty}, and environmental coverage and exploration~\cite{GF-MA-MI-MS-IT:22}.  
Solving MIP problems is NP-complete, making traditional MIP solvers sensitive to the number of variables~\cite{ioan2021mixed} and primarily applicable to small-scale problems. However, for our problem of interest, the coordination of robots occurs at every time step, depending on their coupled actions and states. Such constraints and task specifications can be encoded through logic formulation and piece-wise nonlinear functions~\cite{calegari2021logic}.
Yet, considering the entire trajectories of all robots as variables can quickly render MINLP (Mixed Integral NonLinear Programming) with coupled constraints intractable. To address this issue, the advances in MIP such as interior point~\cite{PFA-WSJ:20}, branch and bound~\cite{BLD-HDC-MRA-HJD:18,DCA-WKC-MJ:23} methods, and heuristic approaches~\cite{yu2016optimal}, can drastically improve computation scalability by leveraging convexity properties or the decomposability of the problem. While these properties may not inherently exist for general applications, one can introduce special cost function design and relaxation~\cite{DCA-WKC-MJ:23} to promote convexity; or by simplifying explicit collaboration between robots~\cite{yu2016optimal} to enhance problem decomposability.

In this paper, these techniques~\cite{DCA-WKC-MJ:23,yu2016optimal} are not directly applicable since explicit coordination is critical to formulating robots' coordination and impact significantly on the optimal behavior of the team\footnote{In simulated experiments, a naive baseline algorithm with simplified robot coordination will be given to reflect this gap}. On the other hand, given the hybrid action spaces of the robots, even simple linear risk functions can lead to highly non-smooth non-convex objectives and constraints, which can make MINLP solvers numerically unstable, or converge to sub-optimal solutions~\cite{KD:14}.
Finally, our problem requires fast deployment and quick adaptation to adversaries with random positions, which favors closed-loop solutions over the open-loop solutions provided by MINLP.

\subsection{Multi-Agent Reinforcement Learning}
While optimization techniques suffer from computational complexity, Reinforcement Learning (RL) methods enable robots to employ trial and error to efficiently find empirical solutions for complex problems. Moreover, the learned policy is closed-loop and can adapt in real-time to new adversary positions.
For centralized problem solving, Deep Q-Networks (DQN)\cite{OI-BC-PA-VRB:16}, a value-based RL method suitable for discrete actions, has been applied to learn team formations in battle games~\cite{DE-ST:18}. For complex tasks with continuous actions such as traffic optimization~\cite{ZH-CW-HZ-LM-YY:20}, Advantage Actor-Critic (A2C) methods\cite{KV-TJ:99} can achieve faster convergence and better exploration by utilizing a policy-based model as an actor. Building on A2C, there are generalizations such as Proximal Policy Optimization (PPO)\cite{SJ-WF-DP-RA-KO:17} and Deep Deterministic Policy Gradient (DDPG)~\cite{LS-WY-CX-DH-FF-RS:19} with improved stability or data efficiency.
For systems with hybrid action space, there exist hybrid-PPO\cite{FZ-SR-ZW-YY:19} methods that can output discrete actions simultaneously with continuous actions.
In addition, considering the agent-based nature of our problem, the mentioned algorithms also have multi-agent variants such as Deep Coordination Graph~\cite{BW-KV-WS:20}, Multi-agent PPO~\cite{WJ-SL:20},  and Multi-agent DDPG~\cite{LR-WY-TA-HI-PA:17}, allowing for decentralized execution, where each robot learns a local model and determines actions according to local observation. The key advantage of MARL is to improve algorithmic scalability. However, the solution may be sub-optimal due to partial information. 

In our setup, although we follow an idea similar to existing centralized hybrid-PPO methods, we face challenges in training efficiency. This has motivated us to apply special treatment to the state encoding of robots and adversaries, as well as reward reshaping, to address these issues.

 \section{Problem Formulation}\label{sec_pf}
In this section, we will first formulate the \textit{task} of the multi-robot team, then introduce the notions of \textit{risk} and \textit{guard}. Based on these, we quantify the \textit{team cost} and describe our \textit{problem of interest}. Throughout the following definitions, adversaries are considered to be heterogeneous while robots are homogeneous. 
 
\noindent\textbf{Robots:} Consider a number of $n$ robots traversing a route of length $L$.
The position of the $i$-th robot along the route at time $t$ is represented by $s_i^t\in [0, L]$. Let $v_i^t\in[0,v_{\max}]$ denote the speed of the $i$-th traveling robot at time $t$. Thus, the position update for each traveling robot is given as
 \begin{align}\label{eq_dyn}
     s_i^{t+1} = s_i^t + v_i^t \Delta t.
 \end{align}
where $\Delta t$ is the time interval. 
Before robot $i$ arrives at the destination, each time step will produce a time penalty, denoted by $P_i^t$. We define
\begin{align}\label{eq_def_pi}
P_i^t = \begin{cases}p & \text{if}~ s_i^t < L,\\
    0& \text{if} ~s_i^t = L.
    \end{cases}
\end{align}

\noindent\textbf{Adversaries and Risk:} Let $m$ denote the number of adversaries. The position of adversary $j$ is represented as 
\begin{align}\label{eq_adv}
z_j\in \mathcal{D}_j\subset(0, L),
\end{align}
which is randomly chosen from a set $\mathcal{D}_j$ for each trial of experiment.
Around $z_j$ is the impact zone of the adversary, denoted by $\mathcal{M}_j$. If a robot is in this region, i.e., $s_i^t\in \mathcal{M}_j$, a cost $r_{i,j}^t$ will be incurred, 
 \begin{align}\label{eq_def_r}
     r_{i,j}^t = 
     \begin{cases}
        f_j(s_i^t,z_j)\ge0& \text{if}~s_i^t\in \mathcal{M}_j, \\
        0 & \text{if}~s_i^t\notin \mathcal{M}_j,
     \end{cases}
 \end{align}
which depends on the relative positions of $s_i^t$ and $z_j^t$.

\noindent\textbf{Guard:} During traversal, if $s_k^t\in\mathcal{M}_j$, robot $k\in\{1,\cdots,n\}$ can reduce its speed and counteract adversary $j$ as a `guard'. 
Specifically, let  $g^t_k\in\{1, 2,\cdots,m\}$ denote the index of the adversary that robot $k$ is guarding against at time $t$.
Then, the risks that adversary $j$ incurs to robots $i,~\forall s_i^t \in \mathcal{M}_j$ are discounted to $\alpha_{k,j}^t r_{i,j}^t$, where
\begin{align}\label{eq_discount}
    \alpha_{k,j}^t = \begin{cases}
        1- \beta \displaystyle \frac{|v_{\max}-v_k^t|}{v_{\max}}&\text{if}~g_k^t=j,\\
        1 &\text{otherwise},
    \end{cases}
\end{align}
is the discount factor with $\beta\in(0,1)$. When $v_k^t=0$, robot $k$ achieves best guarding performance $\alpha^t_{k,j}=1-\beta$, while as $v_k^t\to v_{\max}$, the guarding effect vanishes. Furthermore, we assume the guarding effects stack with each other, thus, considering all robots in the system guarding an adversary $j$, the risk it incurs to robot $i$ is discounted by all guards as
    $\prod_{k=1}^n \alpha_{k,j}^t r_{i,j}^t$.

\noindent{\textbf{Team Cost:}} 
Let $T$ be the total time for all robots in the team to traverse the route.
Based on the above definitions, the team cost considers the risks and time penalties accumulated by all robots,
\begin{align}\label{eq_def_f}
    \bm{J} = \sum_{i=1}^n (\bm{R}_i + \bm{P}_i),
\end{align}
where $\bm{R}_i = \sum_{t=1}^T R_i^t \Delta t$ and $\bm{P}_i = \sum_{t=1}^T P_i^t \Delta t$, with
\begin{align}\label{eq_def_Ri}
    \quad R_i^t=\sum_{j=1}^m \prod_{k=1}^n \alpha_{k,j}^t r_{i,j}^t.
\end{align}
taking into account the guarding effect.

\noindent{\textbf{Problem of Interest:}} In each time step, robot $i$'s action is composed of traveling speed $v_{i}^t$ and guard target $g_i^t$. Assume the robots can observe adversaries' positions $z_j$ but do not know $\mathcal{D}_j$. To strategically design all robots' behaviors,
let $\bm{v}^t = \{v_1^t, \dots, v_n^{t}\}$, and $\bm{g}^t = \{g_1^t, \dots, g_n^{t}\}$.
The problem is to minimize the team cost, i.e., $t\in\{1,\cdots,T\}$
\begin{align}
    \min_{\{\bm{v}^t,\bm{g}^t\}} \bm{J}.
\end{align}

Note that the moving and guarding behaviors of robots lead to team coordination, as one robot can decrease its speed to benefit all traveling robots within the influence region of the guarded adversary.

 \section{method}\label{Sec_method}
In this section, we present methods for solving the formulated problem. We perform a simple analysis for the case of single-robot single-adversary. 
For more complicated cases, we introduce proximal policy optimization (PPO) based RL algorithms with a special multi-weighted hot state encoding mechanism and reward reshaping to improve training efficiency.

\subsection{An observation for the single adversary case}\label{Sec_theory}
We start by considering a single robot and a single adversary case that makes the robot's behavior analyzable. Although this analysis does not yield directly applicable robot coordination strategies, its conclusions align with several real-world observations and can offer insights into the multiple robots and adversaries cases.

Given that the robot's velocity affects its guarding performance $\alpha_{i,j}^t$ as shown in \eqref{eq_discount}, the single robot faces a trade-off: slowing down to incur a discount factor $\alpha_{i,j}^t$ to the risk it takes or speeding up to decrease the time it stays in the adversary controlled area.

\begin{proposition}\label{lm_1}
Suppose there's only one robot and one adversity. Consider the guard discount defined in \eqref{eq_discount}.
If $\Delta t \to 0$, the cost $\bm{J}$ is minimized if the robot always maintains maximum speed, i.e.,  $\forall t$, $v=v_{\max}$.
\end{proposition}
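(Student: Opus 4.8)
The plan is to pass to the $\Delta t \to 0$ limit, reparametrize the accumulated cost by arc length along the route, and then minimize the resulting integrand pointwise in the robot's speed.

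First I would reduce to strictly positive speed profiles: if $v^t = 0$ over a set of steps of non-vanishing total duration, the robot either never reaches $L$ or does so only after an arbitrarily long time, so the time penalty $\bm{P}$ (and hence $\bm{J}$) is unbounded; such profiles are dominated by, e.g., the constant profile $v \equiv v_{\max}$ and can be discarded. Restricting to $v^t > 0$, the position $s^t$ is strictly increasing, so in the limit $\Delta t \to 0$ the trajectory becomes an increasing map $t \mapsto s(t)$ with a well-defined inverse $s \mapsto t(s)$. Since $m=1$ the guard index is forced, $g^t \equiv 1$, so \eqref{eq_discount} gives $\alpha^t = 1 - \beta\,(v_{\max}-v^t)/v_{\max}$ on $\mathcal{M}$ (and $\alpha^t$ is irrelevant off $\mathcal{M}$, where the risk vanishes by \eqref{eq_def_r}).

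Next, using $dt = ds/v$, I would rewrite the two summands of \eqref{eq_def_f} (dropping the robot index) as $\bm{P} = p\int_0^L \frac{ds}{v(s)}$ and $\bm{R} = \int_0^L \alpha(s)\,f(s,z)\,\frac{ds}{v(s)}$, where $v(s)$ is the speed at position $s$ and $f(s,z)\ge 0$ is extended by $0$ off $\mathcal{M}$. Substituting $\alpha(s) = (1-\beta) + \beta\,v(s)/v_{\max}$ and collecting terms gives
\begin{align}
\bm{J} \;=\; \int_0^L \left[\, \frac{(1-\beta)\,f(s,z) + p}{v(s)} \;+\; \frac{\beta\,f(s,z)}{v_{\max}} \,\right] ds. \nonumber
\end{align}
The second bracketed term does not depend on $v(s)$, and the coefficient $(1-\beta)\,f(s,z)+p$ is nonnegative (strictly positive once $p>0$), so the first term is a non-increasing — indeed strictly decreasing — function of $v(s)\in(0,v_{\max}]$. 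Hence the integrand is minimized, simultaneously for every $s$, by the largest admissible value $v(s)=v_{\max}$, which establishes the claim (and shows this minimizer is unique up to a measure-zero set when $p>0$).

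The main obstacle is not the optimization, which is immediate once the cost is in integral form, but the continuum-limit bookkeeping: one must justify that the Riemann sums $\sum_t(\cdot)\Delta t$ converge to the stated integrals under the time-to-position change of variables, which needs the positivity reduction above together with a mild measurability/regularity assumption on the speed profile and on $f$. A secondary point to handle cleanly is the intuition this formula encodes — slowing down in $\mathcal{M}$ shrinks the instantaneous risk rate only linearly in the speed deficit while inflating the dwell time like $1/v$, so the product $\alpha(s)/v(s) = (1-\beta)/v(s) + \beta/v_{\max}$ still decreases in $v(s)$; speed therefore helps both the risk term and the time term, and no trade-off survives in the limit.
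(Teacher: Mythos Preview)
Your argument is essentially the same as the paper's: pass to the continuous limit, reparametrize by position via $dt = ds/v$, and observe that the resulting integrand $(1-\beta)/v + \beta/v_{\max}$ multiplying $f(s,z)$ is pointwise decreasing in $v$, so $v=v_{\max}$ is optimal. The only differences are cosmetic --- the paper treats $\bm{R}$ and $\bm{P}$ separately rather than combining them, and omits the positivity reduction and regularity caveats you spell out.
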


\begin{proof}
For ease of performing analysis, given $\Delta t \to 0$, we convert the risk accumulation to a continuous form, which reads\footnote{The subscripts $i,j$ are omitted since only considering one robot and one adversary. For continuous representation, the notation for time $t$ is also omitted.}
\begin{align}\label{eq_cont}
    \bm{R}&=\int_{0}^{T}(1-\beta+\frac{\beta v}{v_{\max}})f(s,z)dt\nonumber\\
    &=\int_{0}^{L}(1-\beta+\beta\frac{v}{v_{\max}})f(s,z)\frac{ds}{v}\nonumber\\
    &= \int_{0}^{L}(\frac{1-\beta}{v}+\frac{\beta}{v_{\max}})f(s,z){ds}
\end{align}
where $v>0$ and $\int_0^{T}s ~dt=L$. In the second line of the equation, we substitute the integration variable using the property $\frac{d{s}}{dt}=v$. From \eqref{eq_cont}, it's clear that $\bm{R}$ is minimized if $\forall t, v=v_{\max}$. 
Furthermore, given the definition of $\bm{P}$ as the time penalty, it is also minimized by maintaining $v=v_{\max}$. This completes the proof.
\end{proof}

Proposition~\ref{lm_1} only analyzes a single-robot single-adversary case. 
However, in the multi-robot case, if there exist spots $s_i^t \in \mathcal{M}_j$ such that $f_j(s_i^t, z_j^t) = 0$, indicating that some robots can guard moving robots with $0$ risk, then they should stop to guard, and the moving robots should adopt the strategy of Proposition \ref{lm_1} to reduce $\sum_{i=1}^n \bm{R}_i$. This leads to a `bang-bang behavior'~\cite{BR-GI-GO:56}, where the robot either remains in a `safe spot' to guard others or moves at full speed when under the protection of other robots. This moving pattern, known as `bounding overwatch', is well-justified in the military domain. However, this strategy also incurs significant time costs in terms of $\sum_{i=1}^n \bm{P}_i$, as it necessitates some robots to fully stop when guarding others.

When the number of robots grows large and the time penalty $\sum_{i=1}^n \bm{P}_i$ begins to dominate the overall cost, the strategy might change into two possible variations: (i) the robots take special scheduling to stop or move, and when moving, they move at full speed; (ii) several robots move at intermediate speeds to perform move and guard simultaneously. Both strategies are observed later in our simulations section, depending on the environment setup. Nevertheless, these behaviors consider the movements of multiple robots in a coupled manner, which makes theoretical analysis intractable. In addition, as will be discussed in the simulation, due to the piece-wise and condition-dependent non-linear cost structure and coupled constraints, the MINLP formulation of the problem is very difficult to solve.
Motivated by these, we seek to use reinforcement learning to solve the problem with a closed-loop solution. 

\subsection{MDP Formulation and RL methods}\label{sec_RL}
The Markov Decision Process (MDP) formulation of our problem is defined by the tuple $(\mathcal{S}, \mathcal{A}, \mathcal{T}, \gamma, {R})$, including state, action, state transition, discount factor, and reward. Here, we propose a centralized MDP to address the multi-robot coordination problem formulated in Sec.~\ref{sec_pf}. 
Specifically, let $\bm{s}^t = \{s_1^t, \dots, s_n^t, z_1, \dots, z_m\} \in \mathcal{S}$ be the state set at time $t$, where $s_i^t,z_j\in[0,L]$ are associated with robots positions and adversaries positions, respectively. 
Since adversaries may appear at random positions, their information needs to be encoded into the system state so that the learned policy can generate different strategies corresponding to the adversary positions.  Following this, the state space is defined as:
    \begin{align*}
        \mathcal{S} \coloneqq [0,L]^n\times [0,L]^m.
    \end{align*} 
For the actions of robots, we have $a_i^t = (v_i^t, g_i^t)$, which is a hybrid combination of continuous speed $v_i^t \in [-v_{\max}, v_{\max}]$ and discrete guard behavior $g_i^t \in \{1,2, \dots, m\}$. The action space for all robots is
\begin{align}
    \mathcal{A} \coloneqq \left([-v_{\max}, v_{\max}] \times \{1,2, \dots, m\}\right)^n.
\end{align}

Based on $\mathcal{S}$ and $\mathcal{A}$, the state transition is $\mathcal{T}:\mathcal{S}\times\mathcal{A}\to\mathcal{S}$. The robot states follow motion dynamics \eqref{eq_dyn}, which is deterministic, and the adversary positions are static and do not depend on $\mathcal{A}$. 
The ${R}({\bm{s}}^t,{\bm{a}}^t) $ is the immediate reward of action $ \bm{a}^t \in\mathcal{A}$ with state $\bm{s}^t \in\mathcal{S}$, defined as the negative team cost
\begin{align}\label{eq_def_rw}
     {R}({\bm{s}}^t,{\bm{a}}^t) \coloneqq -\sum_{i=1}^{n}(R_i^t+P_i^t).
\end{align}

We complete our MDP formulation by choosing a discounter factor $\gamma=0.995$. The goal of RL is to learn a policy $\pi: \mathcal{S}\to\mathcal{A}$ to maximize the expected cumulative reward for the whole team over the task horizon $T$, i.e.,
\begin{align}
    \max_{\pi} \quad\underset{{\bm{a}}^t \sim \pi(\cdot|{\bm{s}}^t)}{\mathbb{E}}\left[\sum_{t=0}^T (\gamma)^t R^t\right]. 
    \label{eqn::reward}
\end{align}

\smallskip
\noindent\textbf{Multi-weighted hot state encoding.} To employ RL methods to solve our MDP problem, we can directly feed a scalar representation of each robot's and adversary's state (position) to the model. However, we observe that the dimension of our state space is much smaller than that of the action space due to the joint speed and guard behaviors. This discrepancy hinders the neural network's reasoning capabilities, which cannot efficiently learn parameters \cite{BM-RS-WJX-KNZ-BC-HD:19}. Inspired by the one-hot encoding, we seek to expand the state space using a similar mechanism. However, one-hot encoding is typically suited for discrete variables, whereas our state space is continuous. To address this, for robot positions, we introduce a new \textit{weighted hot encoding} mechanism, which represents a continuous variable as the weighted average of two neighboring one-hot vectors. Specifically, let $h(s)\in [0,1]^{L+1}$ denote the weighted hot encoding for a state $s\in[0,L]$, and $s=s_\text{int}+s_\text{dec}$, which has both integer and decimal parts.
Let $h(s)[k], k\in\mathbb{Z}$ denote the $k$th element of vector $h(s)$.
Then $h(s)$ is a vector with two nonzero entries: 
\begin{align}
    \begin{cases}
    h(s)[s_\text{int}+1]=1- s_\text{dec}\\ h(s)[s_\text{int}+2]=s_\text{dec}
    \end{cases} 
\end{align}
As a simple example, if $L=4$ and $s=3.2$. Since $3.2=0.8\times3+0.2\times 4$, one has, $h(s)=[0~~0~~0~~0.8~~0.2]^{\top}$. If $s=0.7=0\times0.3+1\times0.7$, one has $h(s)=[0.3~~0.7~~0~~0~~0]$.


Since we consider a centralized MDP, we vector stack each robot's encoding and adversaries encoding through $h(\cdot)$ to obtain a Multi-weighted hot state encoding as follows:
$$
    \widetilde{\bm{s}}^t=\text{vec}\{h(s_1^t),\cdots,h(s_n^t),h(z_1), \cdots, h(z_m)\},
$$
which has a dimension of $[0,1]^{(n+1)(L+1)}$. Ths allows us to individually encode robots' positions and adversary positions in a way that is easily understood by the neural network.
It is worth \textbf{remarking} that one-hot encoding is typically used for categorical variables. In our problem, the positions of robots, whether inside or outside of adversary-impacted zones, correspond to completely different properties and different feasible guard actions. This distinction favors one-hot encoding, as all input entries are orthogonal to each other. This fact further justifies why multi-weighted hot state encoding can enhance our learning efficiency.

\begin{figure}
    \centering
    \includegraphics[width=0.45\textwidth]{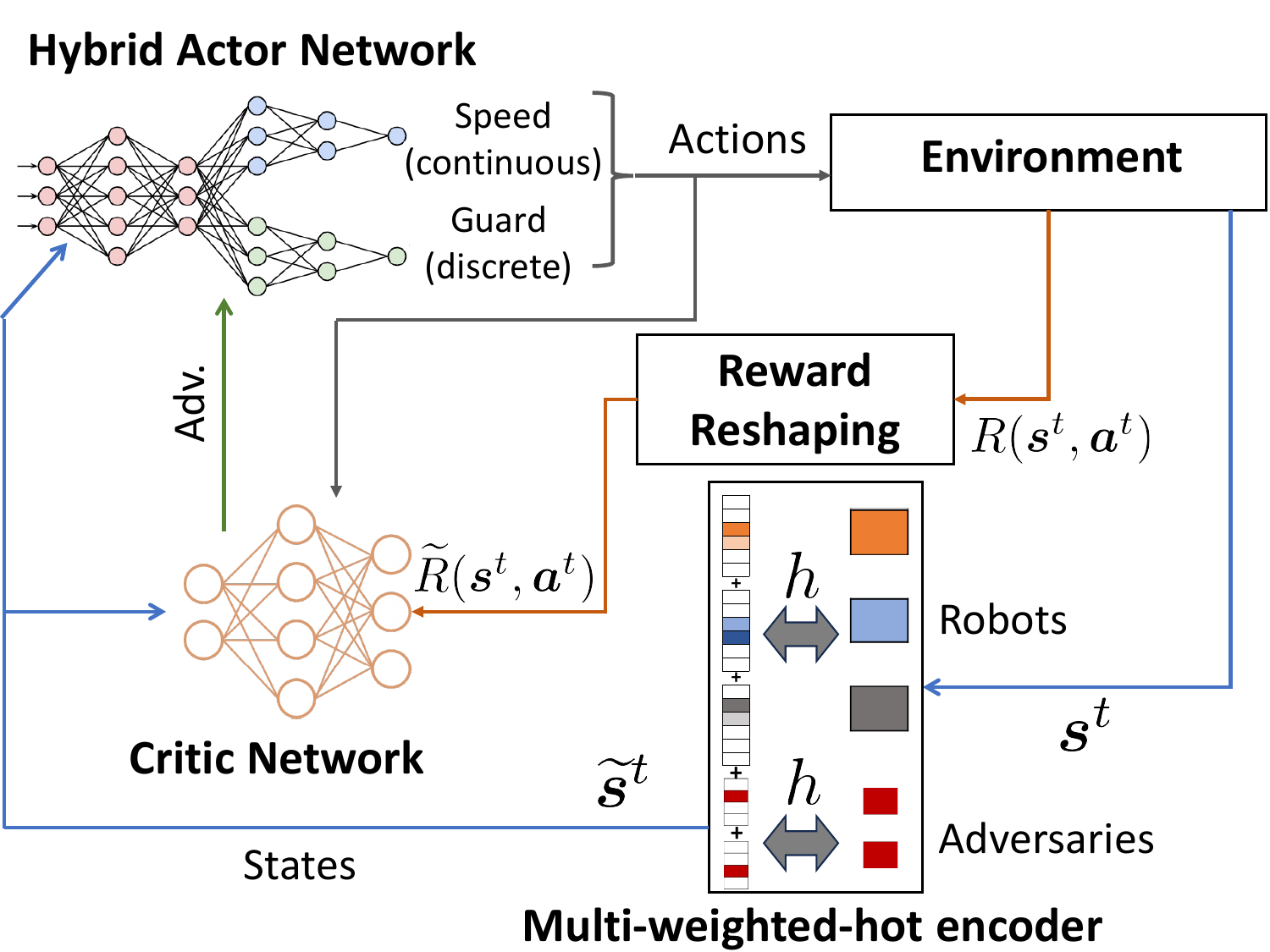}
    \caption{RL Implementation: H-PPO with multi-weighted hot encoding and reward reshaping}
    \vspace{-3ex}
    \label{Fig_HPPO}
\end{figure}

\smallskip
\noindent\textbf{Reward reshaping.} Since our task requires all robots to move to the terminal position, it is common to introduce a one-time constant reward $Q(\bm{s}^t)=q$, if ${s}_i^t=L, \forall i$; $Q(\bm{s}^t)=0$, otherwise. However, this terminal reward is so sparse that it provides limited guidance to robots for state-space exploration and policy updates. Due to the greedy nature of the action section process, robots are reluctant to enter adversary zones. 
To address this, we further introduce a reshaping reward 
\begin{align}
    F(\bm{s}^t, \bm{s}^{t+1})=c\sum_{i=1}^n\left(\gamma s_i^{t+1}-s_i^{t}\right),
\end{align}
which incites robots to move forward and speeds up the learning process\cite{DSM-KD:12}. The final reshaped reward reads 
\begin{align}
    \widetilde{R}(\bm{s}^t, \bm{a}^t) = {R}(\bm{s}^t, \bm{a}^t) + Q(\bm{s}^t) + F(\bm{s}^t, \bm{s}^{t+1})
\end{align}
where $\bm{s}^{t+1}$ is determined by $\bm{s}^t, \bm{a}^t$ through $\mathcal{T}$.
We \textbf{note} that the reshaped reward does not change the optimal solution compared to the original formulation. This is guaranteed by \cite{NAY-HD-RSJ:99}, as both $Q(\bm{s}^t)$ and $F(\bm{s}^t, \bm{s}^{t+1})$ can be rewritten into a potential-based function: $\gamma \Phi(\bm{s}^{t+1}) - \Phi(\bm{s}^t)$, where $\Phi(\cdot)$ is a real-valued function of state and $\gamma$ is the discount factor.

\noindent\textbf{RL Implementation.} Combining the formulated MPD with multi-weighted hot state encoding and reward reshaping, we use two proximal policy optimization (PPO) based RL algorithms to solve the multi-robot path traveling problem. The key difference lies in the way we handle the hybrid action space. First, for simplicity, we consider pure discrete action space, assuming robots only take integral speeds. This has led to a standard PPO with discrete actions (D-PPO) \cite{HCCY-MDC-HM:20}, and the proposed multi-weighted hot state encoding degrades to multi-one hot encoding. Second, we consider PPO with hybrid action space (H-PPO), and let the actor-network simultaneously output continuous speed actions and discrete guard actions. The policy losses ($log$ probabilities) of the two actions are combined and used for training the network parameters. A conceptual diagram of the RL implementation is shown in Fig. \ref{Fig_HPPO}, with a centralized structure to handle all robots' rewards and actions. Leaving as our future work, a possible decentralized implementation of the RL paradigm is to let each robot possess a local model of Fig.~\ref{Fig_HPPO}. Then, leveraging our multi-weighted-hot encoder, if the robot cannot observe certain robot's states, the corresponding weighted-hot vector has all entries being zero.

 \section{Simulated Experiments}


\begin{figure}[b]
    \centering
    \vspace{-3ex}
    \includegraphics[width=.4\textwidth]{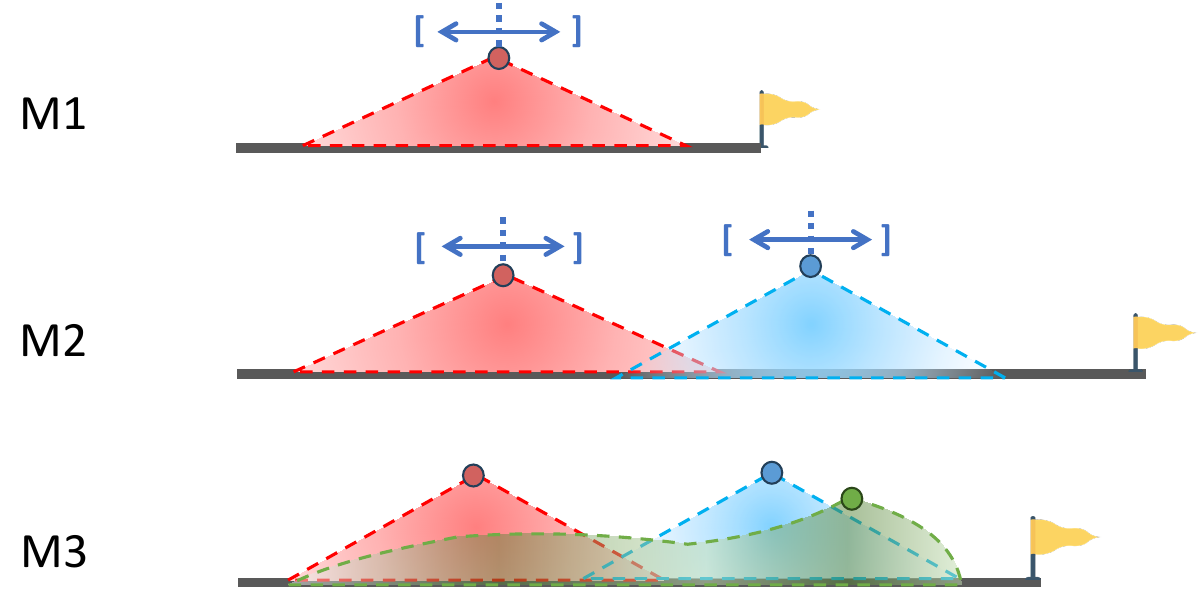}
    \caption{Experiment environments with different adversary configurations. The height represents the unit risk each adversary generates at different locations.}
    \label{fig_env}
\end{figure}
\begin{figure*}[t]
    \centering
    \includegraphics[width=.95\textwidth]{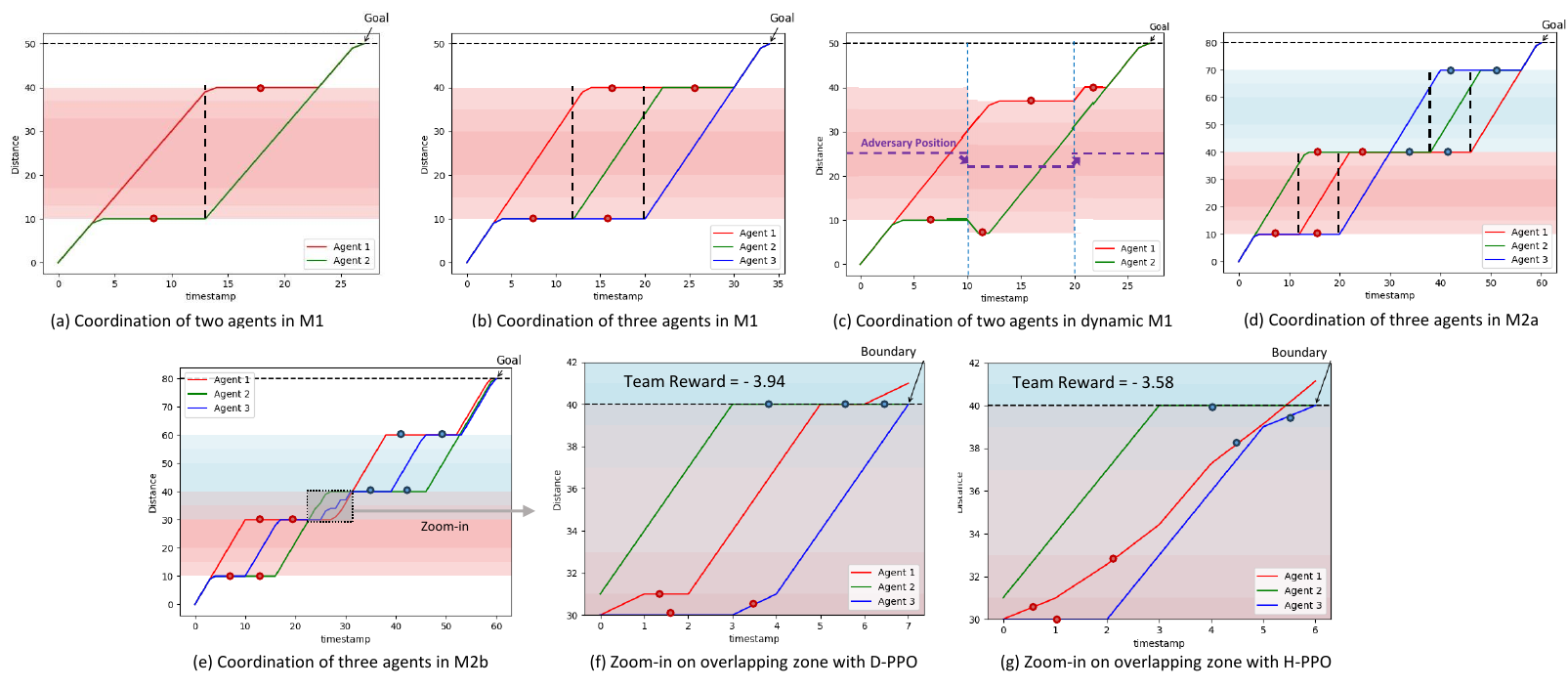}
    \caption{\small Using D-PPO and H-PPO to solve team coordination problem with first two environments in Fig. \ref{fig_env}. The $x$-axis represents time and the $y$-axis represents the length the robot traveled in the environment. The slope represents the robot's speed. The color dots on the trajectories represent the adversary the robot is currently guarding against. In (c) the purple dash represents the current adversary position.  
The shades are the adversary-impacted zones with darker colors in the middle to represent higher risk, corresponding to Fig.~\ref{fig_env}.  }
    \vspace{-3ex}
    \label{fig_M1M2a}
\end{figure*}
In this section, we present simulated experiment results to validate the analytical statements in Sec.~\ref{Sec_theory} and the RL implementation for team coordination behaviors in Sec.~\ref{sec_RL}. For complex cases, we discuss the reasoning behind these behaviors and why they reduce team costs. We also provide comparisons with baselines and numerical methods based on MINLP using Surrogate optimization~\cite{queipo2005surrogate} and BONMIN Solver~\cite{bonami2007bonmin}.

\subsection{Simulation environment}
We consider three different environments shown in Fig.~\ref{fig_env}, including 1-3 adversaries with potential overlaps over their impact zones. The route in Fig. \ref{Fig_intro} is abstracted as a linear distance from the starting point to the target.  
In M1, the adversary's position $z_j$ is randomly chosen from a discrete integer set $\mathcal{D}_j$, with width $10$ centered at the middle of the environment; In M2, the two adversary's positions are chosen from two discrete sets $\mathcal{D}_j$, each has a width $5$; the risk zones may overlap with each other. 
For both M1 and M2, when risk functions $f_j(s_i^t,z_j^t)$ are homogeneous and depends linearly on the distance between the robot and an adversary (visualized by the height of the shades in the figure). 
In M3, we consider stationary adversary positions, but the risk functions are heterogeneous and nonlinear, and with more complex overlaps.


We choose the following environment parameters: time interval $\Delta t=1$, max robot speed $v_{\max}=3$, risk co-efficient $\eta=1$, time penalty $p=1$, guard discount coefficient $\beta = 0.6$. 
All rewards, before being sent to D-PPO, H-PPO models, are re-scaled for normalization purposes.
For each environment (M1, M2, M3), the learning model is individually trained. Within each environment, since adversary positions are part of the state space, a single model is trained with all possible adversary positions. During execution, this single model can adapt to adversary positions that are randomly generated for each test. This reflects the generalization capability of our model.



\subsection{Team coordination and model generalization with homogeneous adversaries.}
The results in Fig.~\ref{fig_M1M2a}a-c consider the M1 scenario with two and three robots, respectively. The D-PPO and H-PPO methods generate almost the same results except for slight differences in velocity when robots leave or approach the boundaries of adversary-impacted zones, which leads to minor changes to the final reward. For conciseness, we only visualize the results from D-PPO.
In Fig.~\ref{fig_M1M2a}a, the behaviors of the two robots follow a `bounding overwatch' pattern described in Sec. \ref{Sec_theory}, i.e., one robot guards at the boundary of the adversary-impacted zone with minimal risk until the other robot moves across the zone at full speed. Then, the two robots switch roles to cooperatively accomplish the task with minimum cost. 
In Fig.~\ref{fig_M1M2a}b, as the number of robots increases, we observe (from the vertical dashes) a change in robots' coordination such that the guarding robots start moving $2$ seconds before the traveling robots arrive at the other end. This adjustment is due to the increase in the number of robots; the time penalty encourages the robots to arrive at the destination more quickly.
In Fig.~\ref{fig_M1M2a}c, we change the adversary's position in the middle of the execution. Although our algorithm is not designed to account for the dynamic behaviors of adversaries, the closed-loop nature of the policy and the fact that all possible adversary positions have been learned during the training process allow robots to quickly make adaptations. At the behavior level, robots maintain their position at the boundary of the zone when guarding. This demonstrates the model's generalization capability. If using optimization-based approaches, such as mixed-integer programming, then replanning is necessary.

We employ D-PPO and H-PPO methods to solve optimal coordination under the M2 environment.
Since the positions of the two adversaries are randomly initialized, it is possible that the two adversary-impacted zones may or may not overlap with each other, which will then impact the coordination patterns of the robots. Here, we choose two representative cases: without overlap and with overlap. In Fig.\ref{fig_M1M2a}d, without overlap, the three robots simply reproduce coordination behaviors in Fig. 4b over the two zones, respectively. 
In the case of Fig.\ref{fig_M1M2a}e, where two adversaries have an overlapped area (c.f. M2 in Fig. \ref{fig_env}), the results of D-PPO and H-PPO show relatively consistent strategies for the $[10,30]$ and $[40,60]$ zones, as shown in Fig.~\ref{fig_M1M2a}e, but exhibit variant behaviors in the overlapping zone $[30,40]$. 
To investigate this, we zoom into the overlapping zone and observe the difference between H-PPO and D-PPO results shown in Fig.~\ref{fig_M1M2a}f-g. Note that robots start at $\{30,31,30\}$ instead of $\{30,30,30\}$ because when entering the zone, robot 2 moves with $v_{\max}=3$ from $s_2=28$ and directly arrives at $s=31$. For a similar reason, the terminal states are $\{41,40,40\}$.
In both Fig.~\ref{fig_M1M2a}f and g, at least one robot takes intermediate speeds that perform move and guard behaviors simultaneously. 
This aligns with the hypothesis-(ii) at the end of Sec.~\ref{Sec_theory}. 
When robots are close to the boundaries of the overlapping zone, their risks are dominated by red and blue adversaries, respectively. 
As observed in plots, the stationary robot always guards the adversary which causes more risk, while the robot with intermediate speed will guard the other adversary.
Moreover, a comparison of the results reveals that H-PPO, with its ability to handle continuous speeds, can refine strategies more effectively, resulting in better rewards. The asymmetry in robot's behavior may be due to time discretization, where the cost is computed at the end of each time step. 

\subsection{Validation of Reward Reshaping and Weighted-hot State Encoding}
Using the same environment setup as in M1 and M2, we validate the effectiveness of the proposed reward reshaping and weighted-hot state encoding techniques, and compare them with the results when these techniques are not used.

\begin{figure}[h]
    \centering
    \includegraphics[width=.46\textwidth]{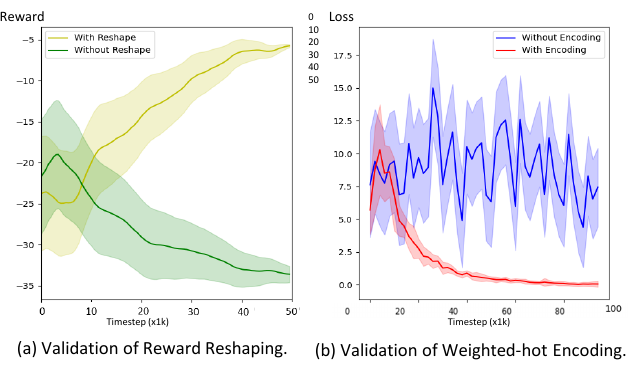}
    \caption{Validations of Reward Reshaping and Weighted-hot State Encoding Techniques.}
    \label{fig:Validations}
\end{figure}

\subsubsection{Necessity of Reward Reshaping}
Using environment M1, the effectiveness of reward reshaping is demonstrated in Fig.~\ref{fig:Validations} (a). It is observed that without reward reshaping, the reward curve in the PPO training shows a trend of downward convergence. When executing the learned policy in the gym environment, it was noticed that all robots halted before entering the adversary zone. We believe this is due to the original problem formulation, where entering the adversary zone triggers significant costs immediately, causing unfavorable exploration because robots tend to avoid such heavily penalized actions. 
Furthermore, since the environment has a maximum simulation times step of 100\footnote{We note that in the ideal case, 100 steps are more than enough for all robots to reach the destination.}, robots will simply wait until the simulation stops and learn a bad behavior with poor rewards. 
On the other hand, with reward reshaping applied, the training could stably converge toward the optimal solution's reward values. As we execute the learned policy in the gym environment, robots can generate meaningful coordination behaviors. This verifies the effectiveness of our reward reshaping as discussed in Sec. \ref{sec_RL}.


\subsubsection{Effectiveness of Weighted-hot State Encoding.}
We use the M2 environment to verify the effectiveness of our weighted-hot state encoding mechanism. 
From Fig.~\ref{fig:Validations} (b), the training loss curve indicates that directly using the scalar values of robots' and adversaries' positions as neural network inputs, i.e., without weighted-hot encoding results in continuous oscillations in the PPO training process. This implies that the neural network faces difficulties in understanding the state description. Instead, when weighted-hot encoding is used and all other model parameters (latent layers, learning rate, exploration rate) are kept the same, the training stability is significantly improved and the loss converges to zero quickly. This verifies the effectiveness and necessity of our weighted-hot state encoding mechanism.




\subsection{Behavior analysis of team coordination in complex environments with heterogeneous adversaries.}
\begin{figure}[t]
    \centering
    \includegraphics[width=.42\textwidth]{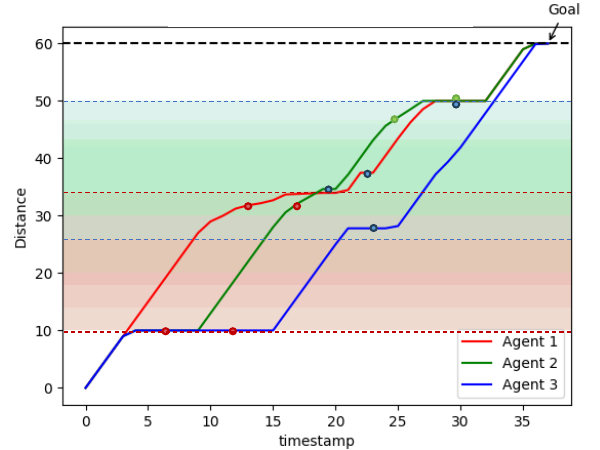}
    \caption{Coordination of three robots in M3 using H-PPO. The figure is read the same way as Fig.~\ref{fig_M1M2a}. }
    \vspace{-1ex}
    \label{fig_n3m3}
\end{figure}
For the case of three adversaries with multiple overlaps and heterogeneous nonlinear risk functions (M3), we deploy three robots and obtain coordination results as shown in Fig.~\ref{fig_n3m3}. Here, the adversaries are stationary. We added colored dashes to better visualize the boundaries of red and blue adversaries. The environment's complexity makes it difficult to judge the optimality of the obtained coordination. In the following, we only discuss the reasoning behind the obtained result and explain why it reduces the overall team cost. First, according to Fig. \ref{fig_env}, the M3 environment adds M2 with an extra green adversary. This green adversary generates an unsymmetric and nonlinear risk zone, which poses little risk early in the path but grows large as robots proceed. Consequently, in Fig.~\ref{fig_n3m3}, robots first follow a pattern very similar to that of Fig. \ref{fig_M1M2a}e. However, midway through the path, as risks associated with the green adversary become large, the predecessor robot 1 does not fully stop to guard others. Instead, it takes an intermediate speed to guard the red adversary. This lasts until robot 1 meets robot 2 and both leave the boundary of the red adversary. On the other hand, we observe robot 3 stops in the middle of the path to perform guard. This happens because, at the moment, the other two robots are suffering huge risks from both blue and green adversaries. By stopping and thereby increasing its own risk, robot 3 contributes to the cost-saving of the whole team. Finally, when robot 1 and 2 both arrives at the boundary, they help robot 3 by guarding red and blue adversaries, respectively. We note that the coordination presented in Fig.~\ref{fig_n3m3} may not be the global optimal, and its optimality gap is difficult to quantify. However, as discussed above, the observed robot coordination does exhibit rational behaviors, with the goal of reducing the overall team cost.

\begin{figure}[t]
    \centering
    \vspace{-1ex}
    \includegraphics[width=.47\textwidth]{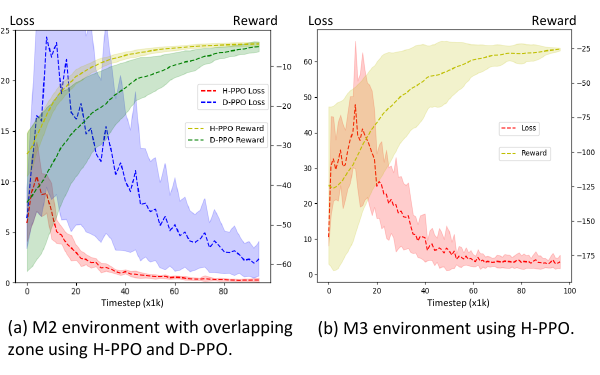}
    \caption{Convergence results for training loss and rewards.}
    \vspace{-3ex}
    \label{fig_loss}
\end{figure}

\setlength{\tabcolsep}{2pt}
\begin{table}[b]
    \centering
    \vspace{-1ex}
    \caption{Comparison with Baseline and MINLP Solvers}
    \begin{tabular}{cccccc}
         \toprule
         Env & D-PPO& \textbf{H-PPO} & Baseline & ~BONMIN~ & Surrogate\\ \midrule 
        M1  & $-5.80\pm 0.2$ & $\mathbf{-5.78\pm 0.0}$ & $-9.94$&$\mathbf{-5.78}$&$-6.11$\\ \midrule
        M2 & $-8.63 \pm 0.7$ & $-8.22 \pm 0.4$ & $-20.18$ &$\mathbf{-8.03}$& N/A  \\ \midrule
        M3  & $-40.13 \pm 8.5$ & $\mathbf{-26.87\pm 2.0}$ & $-44.70$ & $-30.32$&N/A \\ \bottomrule
    \end{tabular}
    \label{rewards_comparison}
    \vspace{-1ex}
\end{table}
\subsection{Reward Comparison with Baseline and MINLP Solvers}
To visualize the training process, in Fig.~\ref{fig_loss}, we use M2 and M3 environments as representative results to show training losses and rewards. We observe that H-PPO performs better than D-PPO and we note that D-PPO struggles to converge for the M3 environment.

For comparison purposes, we introduce a naive baseline strategy where all robots' actions are decoupled, and each robot individually uses greedy choices for move and guard. Additionally, we write the MINLP formulation of the problem and solve it using the BONMIN Solver~\cite{bonami2007bonmin} in Python and the Surrogate algorithm in MATLAB. We note that the piece-wise and condition-dependent non-linear functions \eqref{eq_def_pi}-\eqref{eq_discount} require large numbers of auxiliary variables to formulate, which leads to low computational efficiency. When run indefinitely, BONMIN takes hours trying to close the optimality gap. Therefore, we stop BONMIN in 10 minutes if it does not converge and record the result.
For all environments in Fig. \ref{fig_env}, a fixed set of adversary positions is chosen, and three robots are deployed to test all approaches. Specifically for M2, there is an overlap between the two adversaries. As shown in Table \ref{rewards_comparison}, the developed H-PPO generally outperforms or provides comparable performance to other methods. The Surrogate does not exhibit meaningful convergence on rewards for the two complex cases. As a final remark, while BONMIN is stopped after 10 minutes, H-PPO obviously requires significantly more training time. However, H-PPO can be trained offline and provides a general closed-loop policy for all possible adversary positions. In contrast, for BONMIN, each adversary position needs to be solved individually for an open-loop solution. Thus, for real-time applications, the proposed H-PPO is more desirable.





 \section{Conclusion and Future Work}
We have formulated a coordination problem considering a team of robots traversing a route with adversaries. The cost of the team was determined by the time penalty and the risk robots accumulated when crossing adversary-impacted zones, which can be reduced by robots' guard behavior when moving at a lower speed. We analyzed the optimal coordination strategy for a single adversary scenario. For complex environments, we proposed and implemented an H-PPO method with reward reshaping and a new multi-weighted hot state encoding mechanism. Simulated experiments are performed under different environments and compared with alternative approaches to validate our analysis and the effectiveness of the H-PPO method. 
Based on the simulation results, we discussed the reasoning behind these behaviors in terms of reducing the overall team cost. Future work will consider developing a decentralized learning paradigm to achieve scalable coordination with a larger number of robots and more complicated environments. We also seek to expand the proposed formulation to non-route-based environments considering the geometries of risk zones and terrain, and the decentralization of the proposed algorithm.   

\bibliographystyle{ieeetr}
\bibliography{bib}
\end{document}